\def\eqref#1{equation~\ref{#1}}
\def\1{\bm{1}}
\DeclareMathAlphabet{\mathsfit}{\encodingdefault}{\sfdefault}{m}{sl}
\SetMathAlphabet{\mathsfit}{bold}{\encodingdefault}{\sfdefault}{bx}{n}
\definecolor{customblue}{HTML}{4A86E8} 
\newmdenv[linecolor=customblue!10, backgroundcolor=customblue!7]{lightbluebox}
\newmdenv[linecolor=customblue!10, backgroundcolor=lightgray!12]{graybox}
\newcommand{\newtt}[1]{{\fontfamily{pcr}\selectfont #1}}
\newtheorem{assumption}{Assumption}
\newtheorem{theorem}{Theorem}
\newtheorem{definition}{Definition}
\newtheorem{corollary}{Corollary}
\title{Efficient Model-agnostic Alignment \\ via Bayesian Persuasion}
\author{%
  Fengshuo Bai$^{1,2,3}$
  Mingzhi Wang$^{2,\dag}$\thanks{$^{\dag}$ Equal Contribution. $^{\ddag}$Correspondence to Yaodong Yang$<$\textit{yaodong.yang@pku.edu.cn}$>$ and Ying Wen $<$\textit{ying.wen@sjtu.edu.cn}$>$.} \,
  Zhaowei Zhang$^{2,3,\dag}$ 
  Boyuan Chen$^{2,\dag}$ 
  Yinda Xu$^{1}$\\
  \textbf{Ying Wen$^{1,\ddag}$ \quad Yaodong Yang$^{2,\ddag}$} \\
  $^{1}$Shanghai Jiao Tong University \\
  $^{2}$Institute for Artificial Intelligence, Peking University \\
  $^{3}$National Key Laboratory of General Artificial Intelligence, BIGAI\\
  % examples of more authors
  % \And
  % Coauthor \\
  % Affiliation \\
  % Address \\
  % \texttt{email} \\
  % \AND
  % Coauthor \\
  % Affiliation \\
  % Address \\
  % \texttt{email} \\
  % \And
  % Coauthor \\
  % Affiliation \\
  % Address \\
  % \texttt{email} \\
  % \And
  % Coauthor \\
  % Affiliation \\
  % Address \\
  % \texttt{email} \\
}
\begin{document}
\maketitle

\begin{abstract}
 With recent advancements in large language models (LLMs), alignment has emerged as an effective technique for keeping LLMs consensus with human intent. Current methods primarily involve direct training through Supervised Fine-tuning (SFT) or Reinforcement Learning from Human Feedback (RLHF), both of which require substantial computational resources and extensive ground truth data. This paper explores an efficient method for aligning black-box large models using smaller models, introducing a model-agnostic and lightweight Bayesian Persuasion Alignment framework. We formalize this problem as an optimization of the signaling strategy from the small model's perspective. In the persuasion process, the small model (Advisor) observes the information item (i.e., state) and persuades large models (Receiver) to elicit improved responses. The Receiver then generates a response based on the input, the signal from the Advisor, and its updated belief about the information item. Through training using our framework, we demonstrate that the Advisor can significantly enhance the performance of various Receivers across a range of tasks. We theoretically analyze our persuasion framework and provide an upper bound on the Advisor’s regret, confirming its effectiveness in learning the optimal signaling strategy. Our Empirical results demonstrates that GPT-2 can significantly improve the performance of various models, achieving an average enhancement of 16.1\% in mathematical reasoning ability and 13.7\% in code generation. We hope our work can provide an initial step toward rethinking the alignment framework from the Bayesian Persuasion perspective.
\end{abstract}

\section{Introduction}
Recent years have witnessed increased attention and effort in aligning large language models (LLMs) with human intentions and values \citep{openai2023gpt4,ji2023ai}. This alignment is facilitated by providing reliable supervision through demonstrations \citep{brown2020language,taori2023stanford}, reward signals \citep{ouyang2022training}, preferences \citep{christiano2017deep,rafailov2023direct} or critiques \citep{saunders2022self,bai2022constitutional}, and by employing methods such as supervised learning (\textit{e.g.}, Supervised Fine-tuning, SFT) or reinforcement learning (\textit{e.g.}, Reinforcement Learning from Human Feedback, RLHF) \citep{ouyang2022training}. 

However, these methods, including RLHF, require multiple models and direct training of large models, which significantly increases computational demands \citep{rafailov2023direct}. Moreover, Fine-tuning cannot be applied to closed-source models, complicating output control for alignment with human intents. Additionally, current alignment methods, like SFT and RLHF, face limitations when human evaluators lack expertise in complex tasks \citep{bowman2022measuring,sharma2023towards}. These challenges highlight the need for efficient, scalable alignment strategies for both open-source and closed-source models.
Therefore, our work aims to address the above challenges by answering the following question: 
\begin{center}
   \textbf{\textit{Can we use a smaller model to influence the behaviors of larger models, thereby enabling alignment and enhancing performance with little human supervision or feedback?}}  
\end{center} 
Inspired by Bayesian persuasion \citep{kamenica2011bayesian}, we model the alignment problem as an information design process, involving a protocol between a small model and a large model. Instead of training multiple models as in techniques like RLHF, we only employ a small model with minimal supervision to learn a signaling strategy that influences the behaviors of fixed large models. In this setup, large models are treated as a black box. This modeling approach significantly reduces the demand for computational resources by delegating alignment tasks to smaller models, leading to a more lightweight and efficient framework.
This makes it inherently suitable for a broader range of alignment scenarios, accommodating tasks of varying difficulty and larger models.

\begin{figure}
    \centering
    \includegraphics[width=0.95\linewidth]{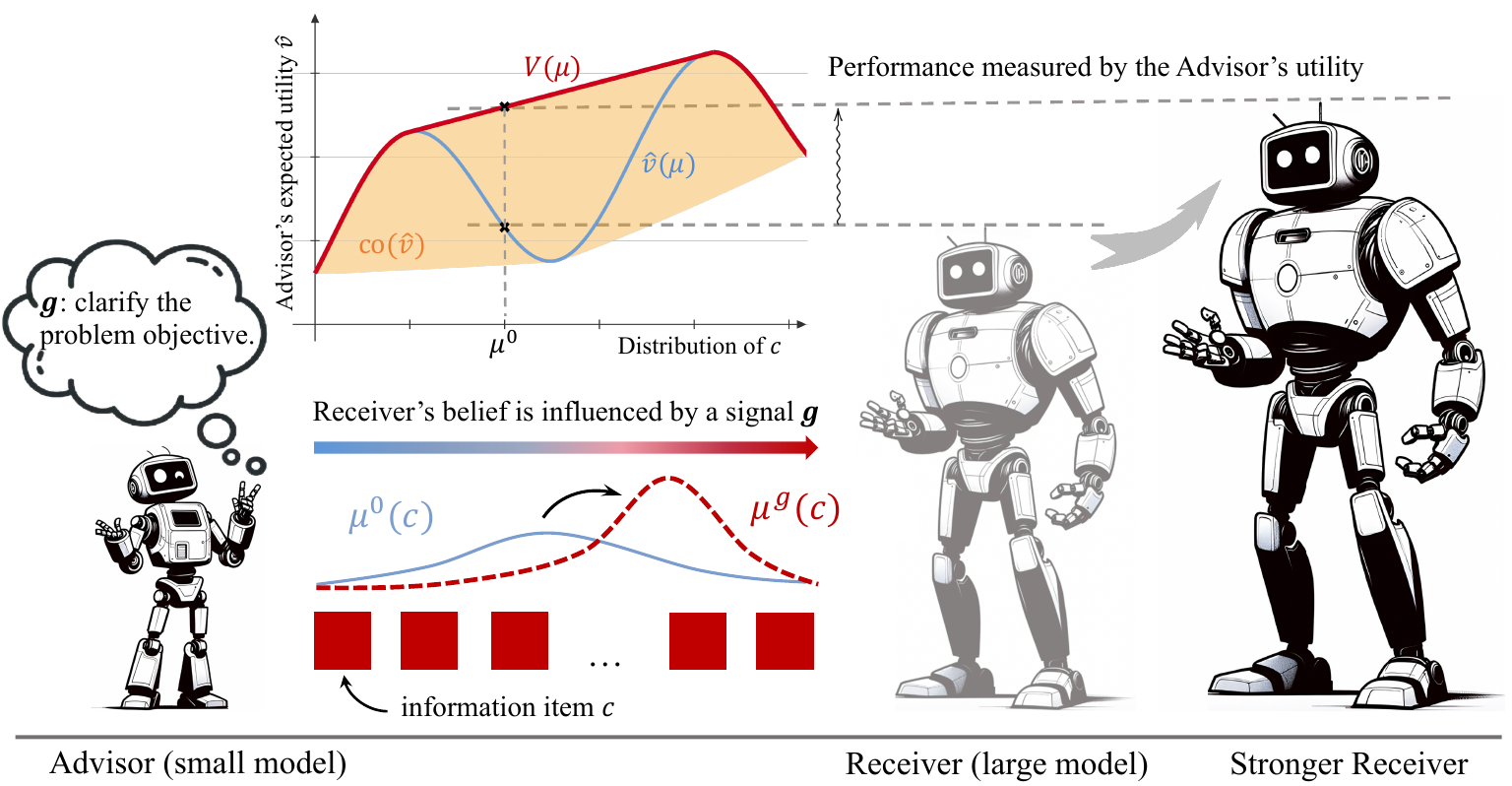}
    \caption{An illustration of our persuasion framework. The Receiver observes a signal $g$ sent by the Advisor and updates its belief from the \textcolor[HTML]{4A90E2}{prior distribution $\mu^0$} to a \textcolor[HTML]{D0021B}{posterior distribution $\mu^g$}. The axes depict the Advisor's expected utility $\hat{v}(\mu)$ across various information distributions $\mu$. In this context, $co(\hat{v})$ denotes the convex hull of $\hat{v}$, while $V$ represents the concave closure of $\hat{v}$. Here, $V(\cdot)$ is the largest expected utility Advisor can achieve with any signal. From the Advisor's perspective, the Receiver's performance is enhanced following persuasion.
    }
    \label{fig:main-paradigm}
\end{figure}

In this work, we introduce a novel framework termed Bayesian Persuasion Alignment, as illustrated in Figure \ref{fig:main-paradigm}. We frame alignment as a Bayesian Persuasion (BP) problem, wherein a smaller model serves as the Advisor and a larger model acts as the Receiver. In this setup, the Advisor generates a signal that is sent to the Receiver. Upon receiving this signal, the Receiver updates its beliefs and produces a response. Our core insight is that a smaller model trained on supervision for optimal signaling strategy can effectively \textit{persuade} larger models, thereby improving the quality of their responses, \textit{i.e.}, their outputs. 
This mechanism provides several advantages:
(1) From the perspective of information design, the well-defined signaling strategy of the Advisor ensures an increase in the Advisor's utility without decreasing the Receiver's utility \citep{kamenica2011bayesian}. 
(2) The Advisor manipulates the Receiver's belief to enhance performance, significantly reducing the need for training resources while ensuring alignment performance, making it an effective and parameter-efficient alignment strategy.
(3) Moreover, the learned signaling strategy can be applied to different Receivers, guaranteeing a model-agnostic nature and making it easier to generalize to harder tasks.

To the best of our knowledge, BP Alignment is the first integration of Bayesian persuasion with the alignment framework. Our main contributions can be summarized in three folds: 
\textbf{First}, we introduce a parameter-efficient and model-agnostic alignment framework that trains a smaller model to enhance the performance of various larger models.  
\textbf{Second}, we demonstrate that our persuasion framework significantly improves the performance of various large models on mathematical problem-solving and code-generation tasks. Specifically, the Advisor (Phi-2) enables significant enhancements, with an average improvement of 22.5\% on GSM8K~\citep{abs-2110-14168}, 39.0\% on MATH~\citep{NEURIPS_BENCHMARKS2021_be83ab3e}, and a 24.7\% increase on HumanEval~\citep{abs-2107-03374}.
\textbf{Lastly}, we theoretically analyze our framework and provide an upper bound on the Advisor's regret, indicating its effectiveness in learning the optimal signaling strategy.

\section{Related Work}
In this section, we will review existing works on Scalable Oversight, AI Persuasion, and Eliciting Latent Knowledge (ELK).

\textbf{Scalable Oversight.}
As models begin to achieve broadly human-level performance and take on more complex tasks that are difficult for humans to understand, providing continual, reliable feedback and ensuring that the models' behaviors align with human intents becomes challenging. This naturally raises the crucial issue of scalable oversight: \textit{how can we provide supervisory signals to more powerful AI systems and ensure they are aligned with human intents?} \citep{superalignment,amodei2016concrete,ji2023ai} Unlike current methods that focus on enhancing the capabilities of weak supervisors \citep{christiano2018supervising,bowman2022measuring,lee2023rlaif}, our framework addresses this challenge by transforming weak supervisors into persuaders and identifying the optimal signal-sending strategy to effectively influence the behaviors of stronger models.

Our work also differs from weak-to-strong generalization \citep{burns2023weak} and similar alignment methods \citep{ji2024aligner,liu2024co}. These methods face a trade-off: the strong model may either mimic the weak model, reducing performance or use its reasoning abilities to improve \citep{burns2023weak}. Additionally, they often rely heavily on ground truth labels. In contrast, our approach uses small models as Advisors to elicit the capabilities of stronger models without adding noisy labels. Guided by the information design principle, our method scales to various stronger models and challenging tasks, minimizing the need for ground truth.
We evaluate our method on various mathematical problem-solving and code-generation tasks using only GPT-2 as the Advisor. It achieves significant advancements, with an average improvement of 9.5\% on GSM8K, 22.6\% on MATH, and 13.7\% on HumanEval across a range of strong models.

\textbf{AI Persuasion.} Persuasion is a dynamic game process in which one player (sender) influences the beliefs or actions of another player (receiver) by providing informative signals, thereby affecting the outcomes for both players. AI persuasion can be categorized into two types based on the target: (1) AI persuading humans to change their original viewpoints (\textit{i.e.}, Captology) \citep{fogg2002persuasive, wang2019persuasion, matz2024potential, durmus2024persuasion}, and (2) employing persuasive signals to change the behavior of AI systems. While most existing research has concentrated on the former, studies on the latter remain relatively nascent. In this paper, we primarily explore the latter category.

\citet{zeng2024johnny} conducted a comprehensive review of decades of social science research and proposed a taxonomy to automatically generate persuasive adversarial prompts that induce unsafe behaviors in LLMs. However, their method lacks a formal definition or analysis of persuasive behavior and its impact. Bayesian persuasion \citep{kamenica2011bayesian, kamenica2019bayesian} is a symmetric information framework that utilizes to influence beliefs by strategically sharing information aligned with motivations, aiding decision-making tasks \citep{gan2022bayesian}. While its application to language models remains unexplored, \cite{zhang2024incentive} suggested it could align AI during deployment by tailoring information based on different scenarios. To the best of our knowledge, we are the first to use dialogue to apply Bayesian persuasion to LLMs and enhance their capabilities.

\textbf{Eliciting Latent Knowledge.} \citet{elk_intro,elk_summary} introduced a theoretical framework termed Eliciting Latent Knowledge (ELK), designed to extract latent knowledge from models to assess whether AI systems align with human intents. This framework includes aspects such as honesty analysis \citep{evans2021truthful}, knowledge elicitation \citep{burns2022discovering,pacchiardi2023catch}, and general task knowledge acquisition \citep{burns2023weak}. Our framework utilizes advisor persuasion to elicit strong models' latent knowledge for solving difficult tasks and has the potential for honesty analysis.

Typical ELK methods struggle to train models to report true beliefs rather than just aligning with human preferences. This issue arises because both strategies yield identical training losses, as they produce the same answers to training inputs \citep{elk_summary}. As a result, models tend to align with human expectations instead of reporting their own beliefs. Our method addresses this by decoupling the training objective into a reporting objective and a persuasion objective, focusing on optimal signaling rather than human-evaluated ground truth.

\section{Bayesian Persuasion Alignment}
In this section, we formally introduce the proposed persuasion framework. We begin by establishing the notations and outlining the persuasion protocol, followed by defining the overall objective and conducting a theoretical analysis of regret.

\subsection{Protocol and Notations}
We introduce the persuasion protocol wherein an Advisor (small model) persuades a Receiver (large model) to improve its response to a given input. For each input $x$, a finite set ${\mathcal{C}}_x$ contains all associated information items (i.e., state) with input $x$. The Receiver's utility function $u(x, c, y)$ is continuous and dependent on its response $y \in {\mathcal{Y}}$ to the input $x \in {\mathcal{X}}$ and the associated information item $c \in {\mathcal{C}}_x$. Similarly, the Advisor has a continuous utility function $v(x, c, y)$, which is contingent on the Receiver's response, input, and associated information item. Importantly, in our settings, the Advisor lacks knowledge of the Receiver’s utility function. For each input $x$, the Advisor and Receiver start with a shared prior $\mu_x^0 \in \text{int}(\Delta({\mathcal{C}}_x))$\footnote{$\text{int}(X)$ denotes the interior of the set $X$ and $\Delta(X)$ represents the set of all probability distributions on $X$}. The signaling strategy $\pi$ is defined by a finite realization space ${\mathcal{G}}$ and a family of distributions ${\pi_x(\cdot|c)}, {c\in{\mathcal{C}}_x}$ over ${\mathcal{G}}$. This strategy is implemented through a neural network with parameters $\theta$. The Advisor sends a signal, and the Receiver observes the chosen signal realization $g \in {\mathcal{G}}$ (with $|{\mathcal{G}}| < \infty$). 

The game timing in persuasion is as follows: The Advisor commits to a signaling strategy $\pi_\theta$ and announces it to the Receiver. For a given input $x$, the Advisor observes an information item sampled from $\mu_x^0$ and sends a signal $g$ to the Receiver. Upon receiving this signal, the Receiver updates its belief about information item in ${\mathcal{C}}_x$, forming posterior distribution $\mu_x^g$ via Bayes's rule. The Receiver then chooses a response from the response set, which is defined by
\begin{equation}\label{eq: receiver_op_y}
y^*(\mu_x^g) = \arg\max\limits_{y \in {\mathcal{Y}}}\mathop{\mathbb{E}}\limits_{c\sim \mu_x^g}\Big[u(x,c,y)\Big].
\end{equation}

The solution of the game is the problem of optimal signaling strategy design from the Advisor’s point of view. Taking the Receiver’s response as given, the Advisor selects a signaling strategy $\pi_\theta$ that maximizes its expected utility. Since the responses are generated only by the Receiver, the Advisor cannot directly influence the information set. Instead, the Advisor can leverage its informational advantage concerning the information item to influence the Receiver indirectly by way of signaling, thereby persuading the Receiver to generate improved responses.

\subsection{Signaling Strategy and Belief Update}
A signaling strategy of the Advisor generates a distribution over ${\mathcal{G}}$, which is the signal realization space. Formally, the signaling strategy $\pi_\theta$ comprises a function $\pi_x: {\mathcal{C}} \rightarrow \Delta({\mathcal{G}})$ for each input $x \in {\mathcal{X}}$. Upon observing an information item $c$, the Advisor sends a signal sampled from $\pi_x(c)$, where the input is $x$, and $\pi_x(g, c)$ denotes the probability of $g \in {\mathcal{G}}$ within this distribution.

The signal space ${\mathcal{G}}$ is broadly construed, including some uninformative signaling strategies. For instance, the Advisor may send the same signal regardless of the information item $c$, such that $\pi_x(c) = \pi_x(c^\prime)$ for all $c, c^\prime \in {\mathcal{C}}_x$. Without loss of generality, we assume that signals in ${\mathcal{G}}$ are perceived as distinct by the Receiver.

Upon receiving a signal $g$, the Receiver updates its posterior belief regarding the information items. The conditional probability of the information item being $c$ is defined as:
\begin{equation}
     \textrm{Pr}(c|g,\pi_x) = 
    \frac{\mu_x(c) \pi_x(g|c)}
    {\sum_{c^\prime \in {\mathcal{C}}_x} \mu_x(c^\prime) \pi_x(g| c^\prime) }.
\label{eq:belief_update}
\end{equation}

The derivation of the Receiver's posterior belief also depends on its knowledge of the signaling strategy $\pi_\theta$. In accordance with the Bayesian persuasion framework, the Advisor commits to a signaling strategy $\pi_\theta$ at the beginning of the process and announces it to the Receiver.

\subsection{Signaling Strategy Optimization}
From the Advisor's perspective, the objective is to identify a signaling strategy $\pi_\theta$ that maximizes the Advisor's expected utility, thereby inducing superior responses from the Receiver.
Accordingly, the signaling strategy is optimized by minimizing the following loss function:
\begin{equation}
    {\mathcal{L}}(\theta) = -\mathop{\mathbb{E}}_{x\in{\mathcal{X}}}
    \Big[ \sum_{c\sim{\mathcal{C}}_x}{\textrm{Pr}(c|g,\pi_x)} \ v(x,c,y) \Big],
\end{equation}
where $y$ is the Receiver's response to input $x$ as determined by equation (\ref{eq: receiver_op_y}).

\subsection{Regret Analysis}
Although the Bayesian Persuasion Alignment framework we propose is practically appealing, a pivotal theoretical question arises: \textit{How can we ensure that this framework robustly learns the optimal signaling strategy over time?} Specifically, it is crucial to demonstrate that the Advisor can gradually find the most persuasive signaling strategy through its interactions with the Receiver. This convergence guarantee is essential for our framework. To address this question, we draw inspiration from the online Bayesian persuasion setting \citep{castiglioni2020online, bernasconi2023optimal} and analyze the performance of our algorithm from an online learning perspective. We introduce the concept of regret, which quantifies the utility difference between the algorithm’s performance and the optimal strategy over a certain period. Demonstrating that the algorithm’s regret grows sublinearly with time would imply that it can progressively converge to the optimal strategy.

Without loss of generality, we focus on signaling schemes that are both \emph{direct} and \emph{persuasive} \citep{arieli2019private}, according to the revelation principle. In a \emph{direct} signaling scheme, the signals directly correspond to response recommendations for the Receiver. Moreover, a signaling scheme is considered \emph{persuasive} if it incentives the Receiver to follow the response recommendations provided by the Advisor. Let ${\mathcal{P}}$ denote the set of \emph{direct} and \emph{persuasive} signaling schemes, where each element $\phi \in \mathcal{P}$ is a mapping $\phi: \mathcal{C} \to \Delta(\mathcal{Y})$. To simplify the notation, we omit $x$ in subsequent analysis. With the definition of the set of persuasive signaling schemes $\mathcal{P}$, the Advisor's expected utility under a signaling scheme $\phi \in \mathcal{P}$ can be expressed as follows:
\begin{equation}
   v(\phi) := \sum_{c \in \mathcal{C}} \sum_{y \in \mathcal{Y}} \mu_{c} \phi_{c}(y) v(c, y),
\end{equation}
where $\mu_c$ represents the prior probability of information item $c$, $\phi_c(y)$ denotes the probability that the signaling scheme $\phi$ recommends response $y$ under information item $c$, and $v(c, y)$ is the Advisor's utility when the Receiver takes response $y$ under information item $c$.

Next, we introduce a linear mapping $f$ that maps each signaling scheme $\phi \in \mathcal{P}$ to a point in the $\mathbb{R}^{|\mathcal{Y}|}$ space. Specifically, for each $\phi \in \mathcal{P}$, we define
\begin{equation}
  f(\phi) := \left[-v(\phi, y)\right]_{y \in \mathcal{Y}},
\end{equation}
where $v(\phi, y) = \sum_{c \in \mathcal{C}} \mu_{c} \phi_{c}(y) v(c, y)$ represents the Advisor's expected utility when the Receiver takes response $y$ under signaling scheme $\phi$.

Intuitively, the mapping $f$ represents each signaling scheme as a $|\mathcal{Y}|$-dimensional vector, where each component $-v(\phi, y)$ represents the negative of the Advisor's expected utility for response $y$. This representation embeds the signaling schemes into a Euclidean space that directly corresponds to the Receiver's response space. Furthermore, we examine the convex hull of the graph of $f$, denoted as $\mathrm{co}\,f(\mathcal{P})$. Each point within $\mathrm{co}\,f(\mathcal{P})$ corresponds to a convex combination of signaling schemes. 

Formally, the Advisor's regret at round $T$ is defined as:
\begin{equation}
    R_T := \max_{\phi \in {\mathcal{P}}} \sum_{t=1}^T v(\phi) - \sum_{t=1}^T v(\phi_t).
\end{equation}

To analyze the regret bound of our persuasion framework, we present the theoretical version of our algorithm in Algorithm \ref{algo:online_bp}.
\begin{algorithm}[H]
\caption{Theoretical Persuasion Algorithm}
\label{algo:online_bp}
\begin{algorithmic}[1]
\Require Set of information items ${\mathcal{C}}$, set of responses ${\mathcal{Y}}$, prior distribution $\mu^{0}$, Advisor's utility function $v$, regret minimizer $\mathcal{R}$ for the set $\mathrm{co}\,f({\mathcal{P}})$
\For{$t = 1, \ldots, T$}
    \State $\mathrm{co}\,f({\mathcal{P}}) \ni z_t \gets \mathcal{R}.\text{RECOMMEND()}$
    \State $\{(\phi_{t}^{(i)}, \lambda_{t}^{(i)})\}_{i \in [m+1]} \gets \text{DECOMPOSE}(z_t, f({\mathcal{P}}))$ \Comment{Caratheodory's Theorem}
    \State Sample $i_t \in [m+1]$ with probabilities $\lambda_{t}^{(1)}, \ldots, \lambda_{t}^{(m+1)}$
    \State Let $\phi_t \gets \phi_{t}^{(i_t)}$
    \State Observe information item $c_t \sim \mu^{0}$
    \State Select and play action $y_t \sim \phi_t(\cdot|c_t)$
    \State $\mathcal{R}.\text{OBSERVE}(v(c_t, y_t))$
\EndFor
\end{algorithmic}
\end{algorithm}

The key idea behind this algorithm is to maintain a regret minimizer $\mathcal{R}$ over the possible signaling strategies, represented by the convex hull of the set of posterior distributions $\mathrm{co}\,f({\mathcal{P}})$. At each round $t$, the algorithm obtains a recommended strategy $z_t$ from $\mathcal{R}$ and decomposes it into a convex combination of extreme points $\{(\phi_{t}^{(i)}, \lambda_{t}^{(i)})\}_{i \in [m+1]}$ using Caratheodory's Theorem \citep{cook1972caratheodory}. The algorithm then samples an index $i_t$ according to the weights $\lambda_{t}^{(i)}$ and plays the corresponding signaling strategy $\phi_t = \phi_{t}^{(i_t)}$. Upon observing the realized information item $c_t$ and the Advisor's utility $v(c_t, y_t)$ for the chosen reponse $y_t$, the algorithm updates the regret minimizer with this feedback.

Under this theoretical algorithm, we can derive the following regret bound.
\begin{theorem}
\label{tm:regret_bound}
Algorithm \ref{algo:online_bp} guarantees regret $R_T = O(m^{3/2} \sqrt{T \log T})$, where $m = |{\mathcal{Y}}|$ is the number of receiver's reponses.
\end{theorem}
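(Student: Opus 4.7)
The plan is to reduce the Advisor's regret to the regret of the internal regret minimizer $\mathcal{R}$ operating on the compact convex set $\mathrm{co}\, f(\mathcal{P}) \subseteq \mathbb{R}^m$, and then invoke a bandit-style online linear optimization guarantee together with a martingale concentration argument to obtain the $O(m^{3/2}\sqrt{T\log T})$ rate.

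\emph{Step 1 (reduction to online linear optimization).} Because the Advisor's expected utility is linear in the signaling scheme and the map $f$ is linear in $\phi$, one has $v(\phi) = -\mathbf{1}^\top f(\phi)$. The Caratheodory decomposition yields $\sum_{i} \lambda_t^{(i)} f(\phi_t^{(i)}) = z_t$, so that $\mathbb{E}_{i_t}[f(\phi_t)] = z_t$ and the conditional expectation of the observation satisfies $\mathbb{E}[v(c_t, y_t) \mid z_t] = -\mathbf{1}^\top z_t$. Each round can thus be viewed as playing the point $z_t \in \mathrm{co}\, f(\mathcal{P})$ against a fixed linear objective, with an unbiased scalar feedback $v(c_t, y_t)$ of bounded range.

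\emph{Step 2 (regret of $\mathcal{R}$).} Instantiate $\mathcal{R}$ as a bandit online linear optimizer on $\mathrm{co}\, f(\mathcal{P}) \subseteq \mathbb{R}^m$, following the standard construction in the online Bayesian persuasion literature (e.g., a self-concordant-barrier algorithm of SCRiBLe / GeometricHedge type). Its regret against any fixed comparator $z^\ast \in \mathrm{co}\, f(\mathcal{P})$ is $O(m^{3/2}\sqrt{T})$ in expectation, where the factor $m$ reflects the dimension of the decision set and the additional $\sqrt{m}$ arises from the variance of reconstructing a linear-loss gradient from a single scalar observation per round. Choosing $z^\ast = f(\phi^\ast)$ with $\phi^\ast \in \arg\max_{\phi \in \mathcal{P}} v(\phi)$ gives $\sum_{t=1}^T (v(\phi^\ast) - \mathbb{E}[v(c_t, y_t)]) = O(m^{3/2}\sqrt{T})$.

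\emph{Step 3 (concentration).} The sequence $D_t = v(\phi_t) - v(c_t, y_t)$ is a bounded martingale difference sequence with respect to the natural filtration, so Azuma--Hoeffding gives $|\sum_{t=1}^T D_t| = O(\sqrt{T \log T})$ with high probability; a symmetric argument controls the fluctuation of $v(\phi_t^{(i_t)})$ around $\sum_i \lambda_t^{(i)} v(\phi_t^{(i)})$ from the Caratheodory sampling step. These two sources of noise contribute the additional $\sqrt{\log T}$ factor, and combining with Step~2 yields $R_T = O(m^{3/2}\sqrt{T\log T})$.

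\emph{Main obstacle.} The hardest step is Step~2: obtaining the correct $m^{3/2}$ dimension dependence under only scalar, bandit-like feedback. Full-information online linear optimization on a convex subset of $\mathbb{R}^m$ already yields $O(\sqrt{mT})$; the extra $\sqrt{m}$ is the standard price of bandit feedback and demands a careful exploration--exploitation design, typically via a self-concordant barrier for $\mathrm{co}\, f(\mathcal{P})$ that simultaneously controls the regret of the iterates and the variance of a one-point gradient estimator. The Caratheodory decomposition in Algorithm~\ref{algo:online_bp} is crucial here, since it guarantees that at every round the played strategy is drawn from a distribution with at most $m+1$ atoms, so the variance of the single-shot feedback scales with $m$ rather than with the (possibly huge) cardinality of $\mathcal{P}$.
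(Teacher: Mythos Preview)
Your strategy is essentially the paper's: reduce to bandit online linear optimization on $\mathrm{co}\,f(\mathcal{P})\subseteq\mathbb{R}^{m}$ and instantiate $\mathcal{R}$ as an Abernethy--Hazan--Rakhlin style self-concordant-barrier algorithm. The paper, however, packages this more tightly. It first checks two structural facts you skip: (i) $\mathcal{P}$ is a polytope, because the persuasiveness and probability-simplex constraints are all linear in $\phi$; and (ii) $f(\phi)=[-v(\phi,y)]_{y\in\mathcal{Y}}$ is a linear map. These two facts are what allow one to invoke Theorem~3.2 and Corollary~3.4 of \citet{bernasconi2023optimal}, which already deliver $R_T\le 16D^{3/2}\sqrt{T\log T}$ with $D\le m$; the $\sqrt{\log T}$ factor is built into the Abernethy et al.\ bandit bound itself (via the barrier-based analysis) and is not obtained by a separate concentration step.

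So two adjustments would align your write-up with the paper's proof. First, add the verification that $\mathcal{P}$ is a polytope; without it, neither the Carath\'eodory decomposition in Algorithm~\ref{algo:online_bp} nor the existence of an $m$-self-concordant barrier on $\mathrm{co}\,f(\mathcal{P})$ is guaranteed, and your Step~2 citation would be unfounded. Second, your Step~3 is superfluous: the $O(m^{3/2}\sqrt{T\log T})$ rate is the bandit OLO guarantee on $z_t$ directly, and the Carath\'eodory sampling plus the $c_t,y_t$ randomness are already accounted for in how $\mathcal{R}$ receives its scalar feedback in Bernasconi et al.'s framework; there is no need for an extra Azuma--Hoeffding layer to manufacture the $\sqrt{\log T}$.
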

The regret bound presented in Theorem \ref{tm:regret_bound} demonstrates that our algorithm achieves sublinear regret over the time horizon $T$, with a dependence on the size of Receiver's response space $m$. The output space of LLMs is theoretically infinite, as they can generate text of arbitrary length. However, each response's length is practically limited. Additionally, responses with same semantics are considered equivalent given a specific input. Therefore, the Receiver's response space can be regarded as finite, aligning well with the assumptions in our theoretical analysis. Although there are differences between the theoretical algorithm and its practical implementation, the core principle of learning the optimal signaling strategy through interaction remains consistent. This consistency provides a theoretical guarantee for the algorithm's performance and demonstrates its effectiveness in learning the optimal signaling strategy over time.

\section{Experiments}
In this section, we evaluate the effectiveness of our persuasion framework on mathematical problems and code generation. Our evaluation aims to address the following key questions:
\vspace{-0.6em}
\begin{itemize}[leftmargin=*, label={}]
\item \textbf{(1)} Can our framework enhance the Receiver's performance in various tasks? (Section ~\ref{subsubsec:perf_persuasion})
\item \textbf{(2)} Can our framework find a non-trivial signaling strategy? (Section \ref{subsubsec:info_structure})
\item \textbf{(3)} How about the efficiency of the proposed framework? (Section \ref{subsubsec:efficiency})
\end{itemize}
\vspace{-0.3em}
Furthermore, we investigate the generalization of the signaling strategy across different Receivers (Section~\ref{subsubsec:perf_persuasion}), across varying difficulties (Section~\ref{subsubsec:easy2hard}), and for various tasks (Appendix~\ref{app:general_on_task}). Details on experiments are provided in Appendix ~\ref{app:train_detail}.

% Main results
%auto-ignore
{\large
\setlength{\extrarowheight}{1.1pt}
\begin{table}[]
\centering
\caption{\textbf{Performance of various Receivers under persuasion.} 
We report the accuracy on GSM8K and MATH, and the pass@1 score on HumanEval across four information structures. "Posterior Information" refers to sampling the information item from the posterior distribution, influenced by the Advisor. The Advisor for math tasks differs from that for code generation tasks. Arrows indicate performance improvements relative to the prior distribution.
}

\label{tab:math_results}
\resizebox{\textwidth}{!}{
\begin{tabular}{@{}clccccc@{}}
\toprule
\multirow{2}{*}{Task} &
  \multicolumn{1}{l}{\multirow{2}{*}{Receiver}} &
  \multicolumn{1}{c}{\multirow{2}{*}{\parbox{2cm}{\centering No \\Information}}} &
  \multicolumn{1}{c}{\multirow{2}{*}{\parbox{2cm}{\centering All \\Information}}} &
  \multicolumn{1}{c}{\multirow{2}{*}{\parbox{2cm}{\centering Prior \\Information}}} &
  \multicolumn{2}{c}{\textbf{Posterior Information}} \\ \cmidrule(l){6-7}\cmidrule(l){6-7}
 &
  \multicolumn{1}{c}{} &
  \multicolumn{1}{c}{} &
  \multicolumn{1}{c}{} &
  \multicolumn{1}{c}{} &
  \multicolumn{1}{c}{Advisor (GPT-2)} &
  \multicolumn{1}{c}{Advisor (Phi-2)} \\ \midrule
\multirow{12}{*}{\parbox{2cm}{\centering GSM8K \\ (8-shot)}}
        & Phi-2      &56.0  &41.0  &56.8  &59.1  &62.1  \\
        & Mistral-7B &34.3  &48.0  &45.7  &50.4  &53.8  \\
        & Llama2-7B  &15.1  &36.6  &27.2  &34.5  &45.6  \\
    & Llama2-7B-Chat &21.8  &31.8  &37.3  &40.0  &50.0  \\
        & Llama2-13B &25.2  &38.9  &36.2  &38.9  &45.9  \\
    & Llama2-13B-Chat&33.9  &37.3  &36.1  &37.9  &39.2  \\
        & Llama3-8B  &47.6  &54.0  &53.7  &56.0  &62.6  \\
&Llama3-8B-Instruct  &73.5  &72.2  &72.3  &74.5  &75.4  \\
        & Vicuna-7B  &14.9  &19.9  &29.9  &35.1  &43.2  \\
        & Vicuna-13B &23.0  &24.8  &35.0  &43.9  &49.2  \\
        & Vicuna-33B &43.2  &44.1  &47.8  &53.1  &58.5  \\ \cdashline{2-7}\cdashline{2-7}
& Average (accuracy) &35.3  &40.8  &43.5  &47.6 (\textbf{9.5\%} $\uparrow$)  &53.2 (\textbf{22.5\%} $\uparrow$)  \\ \midrule
\multirow{12}{*}{\parbox{2cm}{\centering MATH \\ (4-shot)}}
        & Phi-2      &10.1  &11.6  &11.5  &13.9  &15.4  \\
        & Mistral-7B &6.4   &10.3  &7.9   &9.3   &10.8  \\
        & Llama2-7B  &4.1   &9.5   &6.3   &8.6   &10.3  \\
    & Llama2-7B-Chat &4.6   &7.8   &6.0   &8.0   &10.4  \\
        & Llama2-13B  &4.5   &9.7   &7.7   &9.6   &11.4  \\
    & Llama2-13B-Chat &5.2   &9.8   &7.3   &9.2   &10.5  \\
        & Llama3-8B  &11.0  &16.1  &12.8  &15.9  &16.0  \\
&Llama3-8B-Instruct  &18.1  &18.6  &18.1  &18.9  &19.7  \\
        & Vicuna-7B  &3.8   &10.1  &6.7   &8.8   &10.5  \\
        & Vicuna-13B &3.8   &11.1  &6.7   &9.5   &11.0  \\
        & Vicuna-33B &6.8   &13.1  &9.3   &11.2  &13.4  \\ \cdashline{2-7}\cdashline{2-7}
& Average (accuracy) &7.1   &11.6  &9.1   &11.2 (\textbf{22.6\%} $\uparrow$)  &12.7 (\textbf{39.0\%} $\uparrow$)\\ \midrule
\multirow{12}{*}{\parbox{2cm}{\centering HumanEval \\ (0-shot)}}
        & Phi-2      &45.7  &35.1  &39.6  &45.7  &49.4  \\
        & Mistral-7B &28.3  &32.4  &31.2  &33.2  &35.4  \\
        & Llama2-7B  &12.2  &16.2  &14.4  &16.2  &18.3  \\
    & Llama2-7B-Chat &13.4  &16.3  &12.6  &15.6  &22.6  \\
        & Llama2-13B &17.1  &17.7  &16.5  &19.5  &21.3  \\
    & Llama2-13B-Chat&19.5  &17.2  &16.1  &17.4  &22.0  \\
        & Llama3-8B  &27.4  &23.8  &24.6  &31.7  &37.2  \\
& Llama3-8B-Instruct &56.7  &48.1  &53.2  &56.3  &59.5  \\
    & CodeLlama-7B   &31.1  &30.5  &28.9  &31.2  &32.9   \\
    & CodeLlama-13B  &36.5  &39.0  &35.4  &40.8  &40.2  \\
    & CodeLlama-34B  &48.7  &45.1  &41.8  &49.7  &53.2  \\ \cdashline{2-7}\cdashline{2-7}
   & Average (pass@1)&30.6  &29.2  &28.6  &32.5  (\textbf{13.7\%} $\uparrow$) &35.6 (\textbf{24.7\%} $\uparrow$) \\ \bottomrule
\end{tabular}
}
\end{table}
}
\subsection{Settings}
\textbf{Implementation Details.} We train the Advisor for math-solving tasks using the training datasets from GSM8K and MATH, and for the code generation task using the training dataset from MBPP. We construct an information set for each input using Llama3-8B-Instruct \footnote{https://github.com/meta-llama/llama3} for all datasets. Specifically, each input includes seven information items, each emphasizing different key aspects essential for problem-solving. Further details on information set construction are provided in Appendix \ref{app:prior_construction}. In practical implementation, the Advisor's utility function is defined as the logarithm of the probability of generating the correct answer, while the Receiver's utility function is $u(x, c, y) = P(y|x,c)$, naturally aligning the model's inherent mechanism with the Receiver's behavior. A detailed description and intuition for the setup of the utility function are provided in Appendix \ref{app:utility_func}.

\textbf{Datasets.} For a comprehensive evaluation of the ability, we select two kinds of tasks: math problems and code generation. 
\begin{itemize}[leftmargin=*]
\vspace{-0.6em}
\item \textbf{GSM8K} \citep{abs-2110-14168} is high-quality linguistically diverse grade school math word problems created by human problem writers, which contains 7.5k training problems and 1k test problems.
\item \textbf{MATH} \citep{NEURIPS_BENCHMARKS2021_be83ab3e} is a dataset of challenging competition mathematics problems, which is segmented into 7.5k training problems and 5k testing problems.
\item \textbf{HumanEval} \citep{abs-2107-03374} is a code evaluation benchmark consisting of 164 original programming questions, assessing language comprehension, algorithms, and basic mathematics, with some questions equivalent to simple software interview questions.
\item \textbf{MBPP} \citep{abs-2108-07732} consists of approximately 1k crowdsourced Python programming problems, covering basic programming knowledge, standard library functionalities, etc. This dataset is only used for the training of the Advisor models.
\vspace{-0.6em}
\end{itemize}

\textbf{Advisor and Receiver.}
In our persuasion framework, we employ two models: an Advisor (small model) and a Receiver (large model). For the Advisor, we select two well-known open-source small models: \textbf{GPT-2} \citep{radford2019language} (124M) and \textbf{Phi-2} \citep{javaheripi2023phi} (2.7B). To broadly investigate the generalization of the proposed method across various models, we consider several large models as Receivers: \textbf{Phi-2} \citep{javaheripi2023phi} (2.7B), \textbf{Llama-2} \citep{touvron2023llama} (7B, 13B), \textbf{Llama-3} (8B), \textbf{CodeLlama} \citep{roziere2023code} (7B, 13B, 34B), \textbf{Vicuna} \citep{vicuna2023} (7B, 13B, 33B), and \textbf{Mistral} \citep{jiang2023mistral} (7B).

\textbf{Evaluation Metrics.}
For the math problems, we determine accuracy by extracting the last number from the generated responses and comparing it directly to the ground truth. For the code generation tasks, our evaluation focuses on assessing the functional correctness of LLM-synthesized code. We use the unbiased version of the pass@1 \citep{abs-2107-03374} setting for both tasks, namely only generating one result per round. In practice, we use the tool chain-of-thought-hub\footnote{https://github.com/FranxYao/chain-of-thought-hub/tree/main} and DeepSeek-Coder\footnote{https://github.com/deepseek-ai/DeepSeek-Coder} to perform the evaluation process for math and code generation tasks, respectively.

\textbf{Evaluation Settings.}
For any given input, there is a corresponding set of information item, each item of which is related to the input. In our experiments, we examine four information structures. Given the specific input, the Receiver may observe: (1) \textit{No Information} items, (2) \textit{All Information} items, (3) an item sampled from the \textit{Prior Information} distribution, or (4) an item sampled from the \textit{Posterior Information} distribution. Naturally, the variation in information structure has an impact on the quality of the Receiver's response.

\subsection{Results}\label{subsec:results}
We evaluate the Advisor with various Receiver models to investigate the effectiveness of its signaling strategy on math problem-solving and code generation tasks. Table \ref{tab:math_results} shows the Advisor improves the performance of various models through persuasion instead of training. Additional experiments demonstrate that our persuasion framework can identify a non-trivial signaling strategy, which exhibits superior performance in terms of efficiency and generalization.

\subsubsection{Performance on Persuasion}\label{subsubsec:perf_persuasion}
To investigate the effectiveness of our persuasion framework, we conduct an experimental evaluation of the Receiver's behavior under prior information distribution and posterior information distribution. Table~\ref{tab:math_results} illustrates that Advisor can significantly improve different Receiver's performance across a variety of tasks. Comparing the Receiver's performance without additional information to that with prior information, we find that additional information enhances the Receiver's performance. From the perspective of persuasion, prior and posterior distributions share the same information set. Instead of training, the Advisor (GPT-2) can significantly enhance the performance of various models, achieving an average improvement of 9.5\% on GSM8K, 22.6\% on MATH, and 13.7\% on HumanEval. When considering the increment in the model parameters for the Advisor, a larger one (Phi-2) enables significant enhancements, with an average improvement of 22.5\% on GSM8K, 39.0\% on MATH, and a 24.7\% increase on HumanEval. One important observation can be noticed: a good signaling strategy by the Advisor can effectively persuade different Receivers.

% Easy-too-Hard Gen
\begin{figure}[t!]
\centering
\begin{center}
  \includegraphics[width=0.95\linewidth]{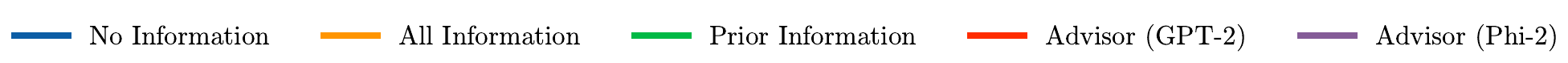}
  % \vspace{-1.0em}
\end{center}
\begin{tabular}{cc}
    \subfloat[MATH Level 1-3 (easy)]{\includegraphics[width=0.44\linewidth]{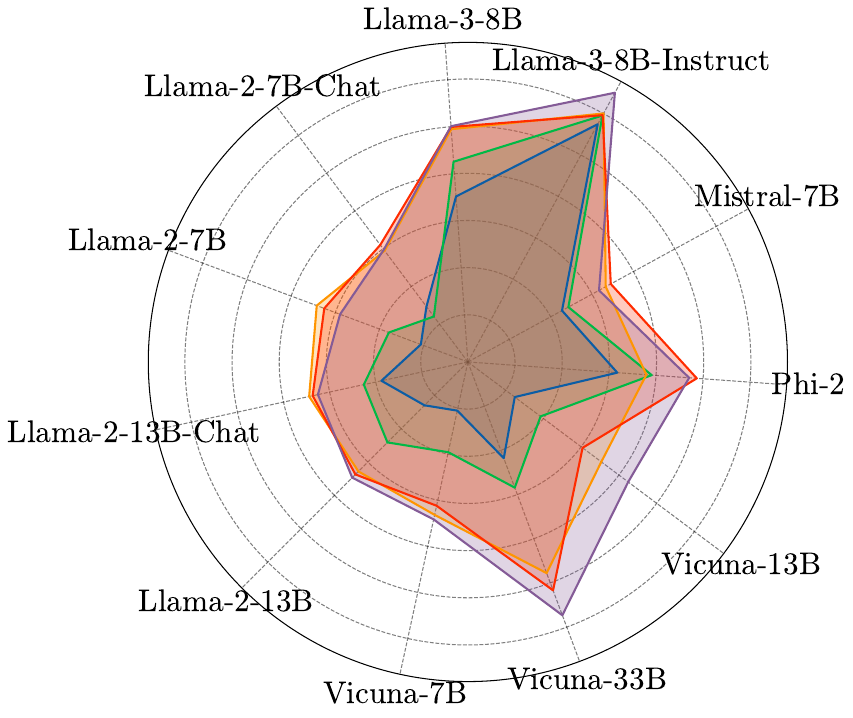}\label{fig:test1}}\hspace{2em}
    \subfloat[MATH Level 4-5 (hard)]{\includegraphics[width=0.44\linewidth]{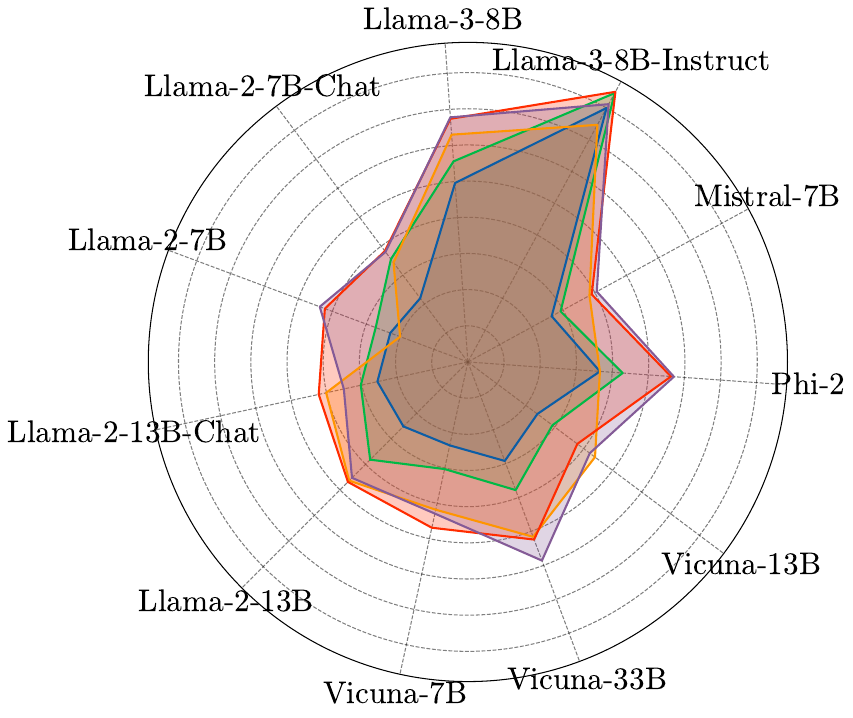}\label{fig:test2}}
\end{tabular}
\caption{\textbf{Performance of Receiver on easy and hard problems.} The Advisor (GPT-2 and Phi-2) is trained on easy problems of training set (MATH level 1-3), and we observe that the capability of the Receiver greatly improved on both easy and hard tasks with the persuasion signal of the Advisor. In both subfigures, our method outperforms scenarios where no information or only prior information is given, and it even surpasses scenarios where all information is provided for most Receiver models.}
\label{fig:e2h_generalization}
\end{figure}

\subsubsection{Impact on Information Structure}\label{subsubsec:info_structure}
In our experiment, we also analyze the impact of different information structures on the Receiver. In the persuasion process, the receiver combines information items from the information set with input to generate a response. From the perspective of prompt engineering, we evaluate the quality of responses when the receiver either disregards information items or considers all information items, to demonstrate the effect of information selection. For `No Information', it serves as a baseline, equivalent to standard performance testing for LLMs. As shown in Table \ref{tab:math_results}, when the Receiver can access all information items, its performance improves. However, it is noteworthy that for some models, using all information items results in minimal gains or even a decline in performance compared to not using the information. It can be explained that providing too much information disperses the model's attention and risks exceeding the model's maximum window length. 

\subsubsection{Easy-to-Hard Generalization}\label{subsubsec:easy2hard}
In the extended evaluation, we investigate the generalizability of the Advisor. The results presented in Table \ref{tab:math_results} demonstrate that the Advisor's signaling strategy is effective across various Receiver, confirming its broad applicability. Following \citet{abs-2403-09472}, we evaluate our framework's \textit{Easy-to-Hard Generalization}, which is defined as the ability to address hard tasks by training on simpler ones. We train our Advisor on easy problems (levels 1-3) from the MATH training dataset and assess their effectiveness in persuading various models on both easy (levels 1-3) and complex problems (levels 4-5) of the MATH test dataset. As shown in Figure~\ref{fig:e2h_generalization}, we observe that advisors not only enhance the performance of various receiver models on easy problems but also improve their performance on hard problems, which are only trained exclusively with supervision on easy problems.

\begin{wrapfigure}{r}{0.5\textwidth}
\vspace{-2.0em}
    \centering
    \includegraphics[width=0.5\textwidth]{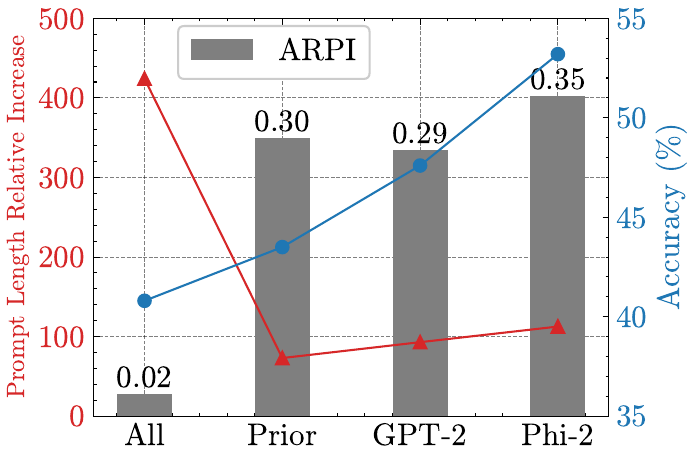}
    \vspace{-1.5em}
    \caption{Average Relative Accuracy Improvement on GSM8K. We compare two posterior structure from Advisor with `All Information' and `Prior Information'. The left y-axis represents the increase in prompt token length relative to the `No Information', while the right y-axis displays the average accuracy across various models on GSM8K.}
    \label{fig:efficiency}
% \vspace{-0.9em}
\end{wrapfigure}

\subsubsection{Efficiency on Persuasion}\label{subsubsec:efficiency}
The efficiency of our framework lies in two aspects. On one hand, a well-trained Advisor can persuade various models to elicit better responses. On the other hand, during the inference stage, our method achieves enhanced Receiver's performance with fewer input tokens. To better understand the relationship between performance improvement and prompt length, we design the Average Relative Performance Improvement (ARPI) metric to measure the improvement in performance relative to the increase in prompt length. 

\textbf{Average Relative Performance Improvement (ARPI).} To compare the performance of a specific information structure with `No Information' across several receivers, let $R(A)$ represent the performance of structure $A$, and let $L$ denote the length of the input prompt tokens. We define $\textrm{ARPI}(A|B)$ as follows:
\begin{equation}
    \textrm{ARPI}(A|B) = \frac{1}{N} \sum_{i=1}^{N} \frac{R(A)-R(B)}{L(A)-L(B)}.
\end{equation}
$\textrm{ARPI}(A|B)$ presents the relative performance difference of structure $A$ compared with structure $B$. Figure~\ref{fig:efficiency} shows that when the Receiver uses all available information to generate responses, it improves performance relative to using no information, but this results in a 26\% increase in the length of input tokens, thereby increasing the computational cost of inference. In contrast, utilizing our persuasion framework, Phi-2 increases the input token length by only 6.9\% while achieving a 22.5\% performance improvement, leading to a higher efficiency ratio.

\vspace{-0.5em}
\section{Conclusion}
\vspace{-0.5em}
In this work, we introduce Bayesian Persuasion Alignment, a novel framework that integrates the concept of Bayesian persuasion with AI alignment. By formulating alignment as a Bayesian Persuasion problem, we employ a smaller model as an Advisor to generate signals that persuade a larger model, the Receiver, to enhance its performance. Our experimental results demonstrate significant improvements in the performance of various large models on mathematical problem-solving tasks and code generation tasks. The theoretical analysis provides an upper bound on the Advisor's regret, highlighting the efficacy of our method in learning the optimal signaling strategy. We hope our approach will inspire future research in integrating information design with alignment, contributing to the development of more efficient and effective AI systems.

\textbf{Limitations.}
The effectiveness of persuasion depends on the signaling strategy and is also influenced by the inherent capabilities of the Receiver. If the model itself lacks the ability to complete a certain task, our method may not be effective, which limits the applicability of our framework. 

%%%%%%%%%%%%%%%%%%%%%%%%%%%%%%%%%%%%%%%%%%%%%%%%%%%%%%%%%%%%%%%%%%%%%%%%%%%%%%%
%%%%%%%%%%%%%%%%%%%%%%%%%%%%%%%%%%%%%%%%%%%%%%%%%%%%%%%%%%%%%%%%%%%%%%%%%%%%%%%
% Reference
% \clearpage
\bibliographystyle{plainnat}
\bibliography{neurips2024}

% Appendix 
\definecolor{codegreen}{rgb}{0,0.6,0}
\definecolor{codegray}{rgb}{0.5,0.5,0.5}
\definecolor{codepurple}{rgb}{0.58,0,0.82}
\definecolor{backcolour}{rgb}{0.95,0.95,0.92}
\newcommand{\DM}[1]{{\color{blue} [DM: #1]}}
\newcommand{\Kuba}[1]{{\color{purple} [Kuba: #1]}}

\renewcommand*{\thefootnote}{(\arabic{footnote})}

\lstdefinestyle{mystyle}{
    backgroundcolor=\color{backcolour},   
    commentstyle=\color{codegreen},
    keywordstyle=\color{magenta},
    numberstyle=\tiny\color{codegray},
    stringstyle=\color{codepurple},
    basicstyle=\ttfamily\footnotesize,
    breakatwhitespace=false,         
    breaklines=true,                 
    captionpos=b,                
    keepspaces=true,                 
    numbers=left,                    
    numbersep=5pt,                  
    showspaces=false,                
    showstringspaces=false,
    showtabs=false,                  
    tabsize=2
}

\lstset{style=mystyle}

\hypersetup{
    colorlinks=true,
    linkcolor=blue,
    citecolor=blue,
    filecolor=magenta,      
    urlcolor=blue,
    linktocpage
}

\newpage
\appendix

\section{Experimental Details}\label{app:train_detail}
In this section, we detail the implementation, including the methodology for constructing information sets, the basic settings for training models, and the prompts utilized in the experiments.

\subsection{Information Set Construction}\label{app:prior_construction}
As the dataset itself does not contain information sets, we employ the open-source model Llama3-8B-Instruct \footnote{https://huggingface.co/meta-llama/Meta-Llama-3-8B-Instruct} to rapidly construct the information set for each question within all dataset. Specifically, during model inference, we set the temperature to 0.7 and the top-p to 0.9. Specifically, for mathematical problem-solving tasks, we generate information across seven aspects: Known Data, Objective, Methodology, Procedure Summary, Solution Verification, Assumptions, and Error Analysis. For code generation tasks, the generated aspects include Input and Output, Problem Statement, Test Cases, Logical Deduction, Algorithm Complexity, Error Handling, Edge Cases, and Code Structure. The entire process of information set construction is automated, with the decision on which aspects to generate also being determined by the LLM. For the prior distribution of information items, we calculate the conditional probabilities for each information item given a problem, generated by the model, and normalize these probabilities to form a valid distribution. We provide the complete prompts used for data generation in Appendix \ref{app:prompts}.

\begin{figure}[htbp]
    \centering
    \begin{graybox}
    A specific example 
    \end{graybox}
    \begin{lightbluebox}
    $<$\textbf{QUESTION}$>$\\
    \newtt{On Thursday the Meat Market sold 210kg of ground beef. On Friday they sold twice that amount. On Saturday they only sold 130kg. On Sunday they sold half of what they sold Saturday. If they originally planned to sell only 500kg, how much meat did they sell beyond their original plans?}\\
    
    $<$\textbf{EXTRA INFORMATION}$>$\\
    \newtt{
    "Known Data": Thursday: 210kg of ground beef sold; Friday: Twice the amount sold on Thursday, which is 2 x 210kg; Saturday: 130kg of ground beef sold; Sunday: Half of what was sold on Saturday, which is 0.5 x 130kg; Original plan: 500kg\\
    "Objective": To calculate how much meat was sold beyond the original plan\\
    "Methodology": 1. Calculate the total amount of ground beef sold on Thursday, Friday, Saturday, and Sunday. 2. Calculate the total amount of ground beef sold beyond the original plan\\
    "Procedure Summary": 1. Add the amount sold on Thursday, Friday, Saturday, and Sunday: 210 + 420 + 130 + 65 2. Subtract the original plan from the total amount sold\\
    "Solution Verification": 1. Check if the total amount sold is equal to the sum of the amounts sold on each day. 2. Check if the answer  makes sense in the context of the problem\\
    "Assumptions": The data provided is accurate and complete. The calculations are performed correctly\\
    "Error Analysis": Potential errors may occur due to incorrect calculation or misinterpretation of the data. Double-checking the calculations and verifying the answer against the given data can help identify and correct any errors\\
    }
    \end{lightbluebox}
\end{figure}

\subsection{Utility Function}\label{app:utility_func}
In our persuasion, the Receiver's utility function $u(x, c, y)$ is continuous and dependent on its response $y \in {\mathcal{Y}}$ to the input $x \in {\mathcal{X}}$ and the associated information item $c \in {\mathcal{C}}_x$. Similarly, the Advisor has a continuous utility function $v(x, c, y)$, which is contingent on the Receiver's response, input, and associated information item. In practical implementation, the utility function of the Advisor is defined as the logarithm of the probability of generating the correct answer, given the input $x$ and the information item $c$. For the utility function $u(x, c, y)$, a natural idea is to set the conditional probability $P(y|x,c)$ as utility. Autoregressive language model generate responses by continuously choosing the next token with the highest probability, ultimately producing the response with the maximum probability. This behavior aligns precisely with the Receiver selecting the optimal response based on the maximum utility. Therefore, in the experiment, given the input $x$ and information item $c$, the response generated by the Receiver is equivalent to selecting a response from the set (\ref{eq: receiver_op_y}).

\subsection{Training Hyperparameters}\label{app:train_hyper}
In our experiments, we train two models \textbf{GPT-2} \citep{radford2019language} (124M) and \textbf{Phi-2} \citep{javaheripi2023phi} (2.7B), utilizing the training datasets from GSM8K and MATH for mathematical problem-solving tasks, and MBPP for code generation tasks. Throughout the training, we employe the \texttt{AdamW} optimizer with hyperparameters set to $\beta_1 = 0.9$, $\beta_2 = 0.95$, and $\epsilon = 10^{-5}$. Additionally, We use a cosine learning rate schedule with a maximum learning rate of $5 \times 10^{-5}$. All models are trained on 4 NVIDIA A800 GPUs.

\subsection{Prompts}\label{app:prompts}
In our experiment, we employ distinct prompts at different stages, which included the construction of the information set for math problems and code generation tasks, the generation of signals by the Advisor, and the generation of responses by the Receiver. Here, we present the prompts used throughout our experiment.

\subsubsection{Math Tasks}
\begin{figure}[htbp]
    \centering
    \begin{graybox}
    Prompt for the construction of information sets on math problems
    \end{graybox}
    \begin{lightbluebox}
    $<$\textbf{INSTRUCTION}$>$\\
    Please provide key information on the following aspects:\\
    
    1. Known Data: List all numerical values and conditions given in the problem.\\
    2. Objective: Clearly define the specific calculation or problem that needs to be solved.\\
    3. Methodology: Describe the mathematical formulas or logical reasoning required to solve the problem.\\
    4. Procedure Summary: Outline the solution steps from the given data to the resolution of the problem.\\
    5. Solution Verification: Suggest methods to verify the correctness of the answer.\\
    6. Assumptions: List any assumptions made to simplify the problem or calculation.\\
    7. Error Analysis: Identify potential errors or mistakes that may occur during the calculation.\\
    
    Ensure that the information provided is accurate, precise to facilitate the correct solution.\\
    
    $<$\textbf{QUESTION}$>$\\
    ...
    \end{lightbluebox}
\end{figure}

\subsubsection{Response Generation}
\begin{figure}[htbp]
    \centering
    \begin{graybox}
    Prompt for Receiver generate a response
    \end{graybox}
    \begin{lightbluebox}
    $<$\textbf{QUESTION}$>$\\
    ...\\
    $<$\textbf{EXTRA INFORMATION}$>$\\
    Integrate with the additional context to form a thorough and insightful answer.\\
    \{\newtt{information item}\}\\
    
    $<$\textbf{ANSWER}$>$\\
    Let's think step by step.\\
    \end{lightbluebox}
\end{figure}

\newpage
\subsubsection{Code Generation Tasks}
\begin{figure}[h!]
    \centering
    \begin{graybox}
    Prompt for the construction of information sets on code task
    \end{graybox}
    \begin{lightbluebox}
    $<$\textbf{INSTRUCTION}$>$\\
    Please provide key information on the following aspects:\\
    
    1. Input and Output: Clearly specify the function's parameters and return types.\\
    2. Problem Statement: Understand the problem to be solved and the expected solution.\\  
    3. Test Cases: Design test cases based on edge cases and special situations.\\  
    4. Logical Deduction: Determine the basic logic for solving the problem based on the description and examples.\\
    5. Algorithm Complexity: Evaluate the time and space complexity of the designed algorithm.\\
    6. Error Handling: Consider handling potential errors and exceptions.\\
    7. Edge Cases: Identify extreme cases in the problem.\\
    
    Ensure that the information provided is accurate, precise to facilitate the correct solution.\\
    
    $<$\textbf{QUESTION}$>$\\
    ...
    \end{lightbluebox}
\end{figure}

\subsubsection{Signal Generation}
\begin{figure}[h]
    \centering
    \begin{graybox}
    Prompt for Advisor generate a signal
    \end{graybox}
    \begin{lightbluebox}
    $<$\textbf{INSTRUCTION}$>$\\
     Summarize below information and present the most important details in an accurate and precise format.\\
    $<$\textbf{EXTRA INFORMATION}$>$\\
    \{\newtt{all information items}\}\\
    \end{lightbluebox}
\end{figure}

\subsection{The Training Code of Persuasion}
The pseudocode below shows the basic training process of our Bayesian persuasion framework. 
\lstinputlisting[language=Python]{code/main.py}

\section{Additional Experiments}

\subsection{Generalization on various tasks}\label{app:general_on_task}
%auto-ignore
{\large
\setlength{\extrarowheight}{1.1pt}
\begin{table}[htbp]
\centering
\caption{\textbf{Performance of various Receivers under persuasion.} 
We report the accuracy on GSM8K and MATH. "Posterior Information" refers to sampling the information item from the posterior distribution, influenced by the Advisor. The Advisor for math tasks differs from that for code generation tasks. Arrows indicate performance improvements relative to the prior distribution.}

\label{tab:various_task_results}
\resizebox{\textwidth}{!}{
\begin{tabular}{@{}clccccc@{}}
\toprule
\multirow{2}{*}{Task} &
  \multicolumn{1}{l}{\multirow{2}{*}{Receiver}} &
  \multicolumn{1}{c}{\multirow{2}{*}{\parbox{2cm}{\centering No \\Information}}} &
  \multicolumn{1}{c}{\multirow{2}{*}{\parbox{2cm}{\centering All \\Information}}} &
  \multicolumn{1}{c}{\multirow{2}{*}{\parbox{2cm}{\centering Prior \\Information}}} &
  \multicolumn{2}{c}{\textbf{Posterior Information}} \\ \cmidrule(l){6-7}\cmidrule(l){6-7}
 &
  \multicolumn{1}{c}{} &
  \multicolumn{1}{c}{} &
  \multicolumn{1}{c}{} &
  \multicolumn{1}{c}{} &
  \multicolumn{1}{c}{Advisor (GPT-2)} &
  \multicolumn{1}{c}{Advisor (Phi-2)} \\ \midrule
\multirow{12}{*}{\parbox{2cm}{\centering GSM8K \\ (8-shot)}}
        & Phi-2      &56.0  &41.0  &56.8  &57.3  &59.3  \\
        & Mistral-7B &34.3  &48.0  &45.7  &47.3  &51.3  \\
        & Llama2-7B  &15.1  &36.6  &27.2  &33.0  &44.0  \\
    & Llama2-7B-Chat &21.8  &31.8  &37.3  &40.3  &48.3  \\
        & Llama2-13B &25.2  &38.9  &36.2  &41.7  &43.7  \\
    & Llama2-13B-Chat&33.9  &37.3  &36.1  &37.7  &37.7  \\
        & Llama3-8B  &47.6  &54.0  &53.7  &54.4  &54.4  \\
&Llama3-8B-Instruct  &73.5  &72.2  &72.3  &74.3  &74.3  \\
        & Vicuna-7B  &14.9  &19.9  &29.9  &32.6  &39.6  \\
        & Vicuna-13B &23.0  &24.8  &35.0  &38.3  &45.3  \\
        & Vicuna-33B &43.2  &44.1  &47.8  &50.8  &55.8  \\ \cdashline{2-7}\cdashline{2-7}
& Average (accuracy) &35.3  &40.8  &43.5  &46.2 (\textbf{6.2\%} $\uparrow$) &50.3  (\textbf{15.6\%} $\uparrow$) \\ \midrule
\multirow{12}{*}{\parbox{2cm}{\centering MATH \\ (4-shot)}}
        & Phi-2      &10.1  &11.6  &11.5  &13.8  &14.8  \\
        & Mistral-7B &6.4   &10.3  &7.9   &9.1   &10.8  \\
        & Llama2-7B  &4.1   &9.5   &6.3   &8.5   &10.3  \\
    & Llama2-7B-Chat &4.6   &7.8   &6.0   &7.9   &9.3  \\
        & Llama2-13B &4.5   &9.7   &7.7   &9.3   &10.5  \\
    & Llama2-13B-Chat&5.2   &9.8   &7.3   &8.8   &10.4  \\
        & Llama3-8B  &11.0  &16.1  &12.8  &15.5  &16.0  \\
&Llama3-8B-Instruct  &18.1  &18.6  &18.1  &18.8  &19.3  \\
        & Vicuna-7B  &3.8   &10.1  &6.7   &8.7   &11.1  \\
        & Vicuna-13B &3.8   &11.1  &6.7   &9.1   &13.0  \\
        & Vicuna-33B &6.8   &13.1  &9.3   &10.6  &12.2  \\ \cdashline{2-7}\cdashline{2-7}
& Average (accuracy) &7.1   &11.6  &9.1   &10.9  (\textbf{19.8\%} $\uparrow$) &12.5  (\textbf{37.3\%} $\uparrow$) \\ \midrule
\end{tabular}
}
\end{table}
}

We investigate the generalization of the signaling strategy across different Receivers in Section~\ref{subsubsec:perf_persuasion} and across varying levels of difficulty in Section~\ref{subsubsec:easy2hard}. In this section, we evaluate the generalization of our well-trained signaling strategy across various tasks. Specifically, we train the Advisor using the GSM8K training dataset and assess its performance on the MATH dataset. Concurrently, 
we train the Advisor using the MATH training dataset and assess its performance on GSM8K. As shown in Table \ref{tab:various_task_results}, the Advisor is capable of enhancing the performance of all Receivers on both GSM8K and MATH to varying degrees. When GPT-2 acts as the Advisor, it facilitates performance improvements for multiple Receivers, with an average performance increase of 6.2\% on GSM8K and 19.8\% on MATH. In contrast, Phi-2 achieves more notable performance enhancements, with gains of 15.6\% on GSM8K and 37.3\% on MATH.

\section{Proof of Theorem \ref{algo:online_bp}}
\label{appendix:proof_theorem}

\begin{assumption}
The prior distribution $\mu^{0}$ is in the interior of $\Delta({\mathcal{C}})$, i.e., $\mu_{c}^{0} > 0$ for all $c \in {\mathcal{C}}$.
\end{assumption}

\begin{definition}[Linear Map]
A vector-valued function $f: X \to \mathbb{R}^D$ is said to be linear if there exists a matrix $M \in \mathbb{R}^{D \times M}$ such that $f(x) = Mx$ for all $x \in X \subseteq \mathbb{R}^M$. 
\end{definition}

\begin{theorem}[Caratheodory's Theorem \citep{cook1972caratheodory}]
Let $S \subseteq \mathbb{R}^D$ be a set. Then, any point in the convex hull of $S$ can be expressed as a convex combination of at most $D+1$ points from $S$.
\end{theorem}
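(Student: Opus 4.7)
The plan is to prove Carath\'eodory's theorem by a one-point elimination argument: start from any convex-combination representation of a point in $\mathrm{co}\,S$ and, as long as it uses more than $D+1$ points from $S$, eliminate one term while keeping the representation convex. Strong induction on the number of terms then yields the claim.

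First I would unpack the definition. Any $x \in \mathrm{co}\,S$ admits, for some finite $n$, a representation $x = \sum_{i=1}^{n} \lambda_i s_i$ with $s_i \in S$, $\lambda_i > 0$, and $\sum_{i=1}^{n} \lambda_i = 1$. If $n \leq D+1$ there is nothing to prove, so I would assume $n \geq D+2$ and show how to produce a new convex representation of $x$ that uses strictly fewer points of $S$. Iterating this reduction (or equivalently inducting on $n$) terminates at $n \leq D+1$.

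The heart of the argument is a dimension count followed by a perturbation. The $n-1 \geq D+1$ difference vectors $s_2 - s_1, \ldots, s_n - s_1$ lie in $\mathbb{R}^D$ and are therefore linearly dependent, so there exist scalars $\mu_2, \ldots, \mu_n$, not all zero, with $\sum_{i=2}^{n} \mu_i (s_i - s_1) = 0$. Setting $\mu_1 := -\sum_{i=2}^{n} \mu_i$ yields an \emph{affine} dependence $\sum_{i=1}^{n} \mu_i s_i = 0$ together with $\sum_{i=1}^{n} \mu_i = 0$ and at least one $\mu_i > 0$. I would then choose $t^{\ast} := \min_{i : \mu_i > 0} \lambda_i / \mu_i$ and replace each $\lambda_i$ by $\lambda_i - t^{\ast} \mu_i$. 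The new weights are nonnegative, sum to one, and still represent $x$, but at least one of them is zero, giving a convex combination of $x$ using at most $n-1$ points from $S$.

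The main obstacle I anticipate is the dual bookkeeping in the elimination step: one must preserve nonnegativity of the coefficients, preserve their unit sum, and drive at least one weight to zero, all simultaneously. This is precisely why the \emph{affine} dependence $\sum_i \mu_i = 0$ (not merely the linear relation $\sum_i \mu_i s_i = 0$) is needed, and why $t^{\ast}$ must be taken as a minimum restricted to indices with $\mu_i > 0$. Once this reduction is in hand, the rest is a routine induction on the number of nonzero coefficients.
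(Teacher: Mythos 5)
Your proof is correct: the affine-dependence-plus-perturbation argument (find $\mu_i$ with $\sum_i \mu_i s_i = 0$ and $\sum_i \mu_i = 0$, then subtract $t^{\ast}\mu_i$ with $t^{\ast} = \min_{i:\mu_i>0} \lambda_i/\mu_i$ to kill one coefficient while preserving nonnegativity and the unit sum) is the standard proof of Carath\'eodory's theorem, and every step checks out, including the key observation that at least one $\mu_i$ must be positive because they sum to zero and are not all zero. Note, however, that the paper itself offers no proof of this statement --- it is quoted as a classical result with a citation and used as a black box in the decomposition step of Algorithm~\ref{algo:online_bp} --- so there is nothing in the paper to compare your argument against; yours is simply a complete, self-contained proof of the cited fact.
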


\begin{theorem}[\cite{bernasconi2023optimal}, Theorem 3.2]
\label{th:polytope}
If $X$ is a polytope and $f$ is a linear map, then there exist algorithms implementing the Carathéodory decomposition and the inverse map $f^{\dagger}$.
\end{theorem}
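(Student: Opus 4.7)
The plan is to reduce both algorithmic tasks to solving linear programs over the polytope $X$, and to handle the Carathéodory case via a classical dimension-reduction recursion that calls the inverse map as a subroutine.

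First I would construct the inverse map $f^{\dagger}$. Fix a halfspace representation $X = \{x \in \mathbb{R}^N : Ax \le b\}$ and let $M_f$ be the matrix with $f(x) = M_f x$. Given $y \in f(X)$, a preimage is any feasible solution of the LP
\begin{equation*}
\{\, x \in \mathbb{R}^N : Ax \le b,\ M_f x = y \,\},
\end{equation*}
which is nonempty by hypothesis and bounded because $X$ is. Any polynomial-time LP solver (ellipsoid, interior-point) returns a basic feasible solution in time polynomial in the bit sizes of $A, b, M_f, y$, and we take this as $f^{\dagger}(y)$.

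Next I would build the Carathéodory decomposition. Since $f$ is linear and $X$ is a polytope, $\mathrm{co}\,f(X) = f(X)$ is itself a polytope, accessible via an optimization oracle obtained by composing $f$ with LPs on $X$. For a target $z \in f(X)$ of ambient dimension $D$, the recursion proceeds as follows: pick a vertex $v_0$ of $f(X)$ by solving $\max_{x \in X} c^\top M_f x$ for an arbitrary $c$, and call $f^{\dagger}(v_0)$ to recover an extreme preimage $\phi^{(0)} \in X$; then shoot the ray $v_0 + t(z - v_0)$ forward until it leaves $f(X)$, producing a boundary point $z_1$ via a single LP over $X$. The hit point $z_1$ lies on a proper face $F_1$ of $f(X)$ of dimension at most $D-1$, so recursing on $z_1$ within $F_1$ yields at most $D$ further vertices. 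Writing $z = \lambda_0 v_0 + (1-\lambda_0) z_1$ with $\lambda_0$ read from the ray parameter and unrolling the recursion produces a convex combination of at most $D+1$ vertices of $f(X)$; a final call to $f^{\dagger}$ on each vertex returns the corresponding $\phi^{(i)} \in X$.

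The main obstacle will be the bookkeeping on the face $F_1$: I have to represent $F_1$ as an explicit polytope of dimension at most $D-1$ by adjoining the inequalities active at $z_1$, verify that vertices of $F_1$ remain vertices of $f(X)$ (a standard polyhedral fact, since a face of a polytope inherits its extreme points), and check that the added constraints keep the description polynomial in size so the inner LPs remain tractable. Granted these invariants, the procedure terminates after $O(D)$ LP calls with at most $D+1$ atoms, and the Carathéodory bound follows by induction on dimension, completing the construction.
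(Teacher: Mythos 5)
Your proposal is correct and rests on the same underlying idea as the paper's proof --- reduce both tasks to linear programming over $X$ --- but you supply the algorithmic content that the paper only asserts, and in one place you are more careful than the paper itself. For the inverse map, the paper proposes solving the linear system $Mx = z$ by Gaussian elimination; a solution of that system alone need not lie in $X$, so the paper's step is incomplete as written. Your formulation as the LP feasibility problem $\{x : Ax \le b,\ M_f x = y\}$ is what is actually required, and it is nonempty exactly because $y \in f(X)$. For the Carath\'eodory decomposition, the paper says only that it ``can be done by solving a linear program,'' whereas you spell out the standard vertex-plus-ray-shooting recursion with the dimension-reduction induction on faces; this is the right construction and yields the $D+1$ bound with $O(D)$ LP calls. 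The remaining loose ends in your write-up are minor and standard: maximizing $c^\top M_f x$ for an \emph{arbitrary} $c$ isolates a face of $f(X)$, not necessarily a vertex, so you need a generic or lexicographically perturbed objective; and the bookkeeping on $F_1$ (active constraints, vertices of a face being vertices of the polytope, polynomial-size descriptions) that you already flag must be carried through, but all of it is routine polyhedral combinatorics. Net effect: same route as the paper, executed at a level of detail that actually closes a small gap in the paper's version.
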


\begin{proof}
If $X$ is a polytope and $f$ is a linear map, then $f$ is also a polytope. By Caratheodory's theorem, every point in $f(X)$ can be expressed as a convex combination of at most $D+1$ vertices of $f(X)$, where $D$ is the dimension of $f(X)$.

Given any $z \in f(X)$, the Carathéodory decomposition algorithm finds at most $D+1$ vertices $\{z_1,\ldots,z_{D+1}\} \subseteq f(X)$ and convex coefficients $\{\lambda_1,\ldots,\lambda_{D+1}\}$ such that $z= \sum_{i=1}^{D+1} \lambda_i z_i$. This can be done by solving a linear program.

For the inverse map, given any $z \in f(X)$, we want to find an $x \in X$ such that $f(x) = z$. Since $f$ is linear, we have $f(x) = Mx$ for some matrix $M$. Finding $x$ is thus equivalent to solving the linear system $Mx = z$. Since $z \in f(X)$, this system is guaranteed to have a solution, which can be found using Gaussian elimination.
\end{proof}

\begin{corollary}[\cite{bernasconi2023optimal}, Corollary 3.4]
\label{cor:regret_bound}
Under the assumptions of Theorem \ref{th:polytope}, there exists a regret minimizer $\mathcal{R}$ such that Algorithm \ref{algo:online_bp} guarantees cumulative regret
\begin{equation}
    R_T \leq 16 D^{3/2} \sqrt{T \log T},
\end{equation}
where $D$ is the dimension of $f(X)$.
\end{corollary}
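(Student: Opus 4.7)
The plan is to prove the corollary by reducing the Advisor's online problem to a standard bandit online linear optimisation (OLO) task over the $D$-dimensional polytope $\mathrm{co}\,f(\mathcal{P})$ and then invoking a known bandit OLO regret minimiser whose guarantee matches the claim. The reduction rests on the fact that for every fixed Advisor utility $v$ the expected utility $v(\phi)$ is a linear functional of the embedding $f(\phi) \in \mathbb{R}^{D}$, and hence extends to a linear functional $\tilde v$ on the whole polytope $\mathrm{co}\,f(\mathcal{P})$; thus the Advisor's game is an OLO game on this polytope with the noisy scalar $v(c_{t}, y_{t})$ as per-round feedback.

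Given this reduction, the next step shows that the Carath\'eodory-based randomisation is an unbiased representation of the recommendation. Theorem \ref{th:polytope} supplies efficient algorithms for the Carath\'eodory decomposition and the inverse map $f^{\dagger}$; so for any $z_{t} \in \mathrm{co}\,f(\mathcal{P})$ the sampled $\phi_{t}$ satisfies $\mathbb{E}[f(\phi_{t}) \mid z_{t}] = z_{t}$, and therefore by linearity $\mathbb{E}[v(c_{t}, y_{t}) \mid z_{t}] = \tilde v(z_{t})$. Matching $\max_{\phi \in \mathcal{P}} v(\phi)$ with $\max_{z \in \mathrm{co}\,f(\mathcal{P})} \tilde v(z)$ (each attained at an extreme point of the polytope) then lets the Advisor's regret $R_{T}$ be rewritten in $z$-coordinates, up to a zero-mean martingale whose Azuma--Hoeffding fluctuations are $O(\sqrt{T \log T})$ and absorbed into the claimed bound.

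With the reduction in place, the plan is to instantiate $\mathcal{R}$ with a bandit OLO regret minimiser on a $D$-dimensional polytope achieving $16\,D^{3/2}\sqrt{T \log T}$ expected regret: the canonical choice is an exponential-weights update on a barycentric spanner of the vertex set, in the spirit of Dani--Hayes--Kakade and used explicitly by \citet{bernasconi2023optimal}. This scheme builds an unbiased, variance-controlled estimator of the linear functional from a single scalar observation via inverse-probability weighting against a mixture exploration distribution supported on the spanner, and tunes its exploration parameter to balance bias against variance, from which the explicit constant $16$ is inherited.

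The main obstacle will be controlling the variance of this scalar-feedback estimator: the extra $\sqrt{D}$ factor separating the bandit rate $D^{3/2}\sqrt{T\log T}$ from the full-information rate $D\sqrt{T\log T}$ is the unavoidable price of reconstructing a $D$-dimensional linear functional from one scalar observation per round, and justifying it rests on the barycentric-spanner exploration machinery of \citet{bernasconi2023optimal} rather than on any argument internal to the persuasion setup; the remaining work is then bookkeeping to combine that regret bound with the zero-mean martingale from the Carath\'eodory sampling step.
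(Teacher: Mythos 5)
Your overall reduction matches the paper's: both treat the Advisor's problem as bandit linear optimization over the polytope $\mathrm{co}\,f(\mathcal{P})$, use the Carath\'eodory decomposition supplied by Theorem \ref{th:polytope} to realize a recommended point $z_t$ as a randomized choice of an extreme point $\phi_t$ with $\mathbb{E}[f(\phi_t)\mid z_t]=z_t$, and then feed the single scalar observation $v(c_t,y_t)$ to an off-the-shelf bandit OLO regret minimizer; your unbiasedness and martingale bookkeeping is actually more explicit than the paper's proof, which absorbs those steps silently. Where you genuinely diverge is in the instantiation of $\mathcal{R}$. The paper, following \citet{bernasconi2023optimal}, uses the self-concordant-barrier algorithm of \citet{abernethy2008competing}, whose Theorem 1 gives expected regret $16\,D\sqrt{\vartheta\,T\log T}$ for a $\vartheta$-self-concordant barrier, combined with the fact \citep{nesterov1994interior} that any polytope in $\mathbb{R}^D$ admits a $D$-self-concordant barrier --- this is precisely where the explicit constant $16$ and the exponent $3/2$ come from. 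You instead propose Geometric Hedge with barycentric-spanner exploration in the style of Dani--Hayes--Kakade. That route does deliver the same $O(D^{3/2}\sqrt{T\log T})$ rate, so the corollary survives asymptotically, but the constant $16$ is an artifact of the barrier analysis and is not what the spanner analysis produces; claiming it is ``inherited'' from that machinery is a misattribution, and attributing the spanner method to \citet{bernasconi2023optimal} is likewise off, since their Corollary 3.4 also rests on the barrier algorithm. If you want the statement exactly as written, with the constant, swap in the self-concordant-barrier minimizer; otherwise state the bound in $O(\cdot)$ form. A minor side remark: the extra $\sqrt{D}$ over the full-information rate is the price paid by these efficient algorithms, not an information-theoretic necessity, so calling it ``unavoidable'' overstates the case.
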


\begin{proof}
To obtain the regret bound, we equip Algorithm \ref{algo:online_bp} with a suitably-defined regret minimizer $\mathcal{R}$. In particular, $\mathcal{R}$ works by observing the realized utility $v(y_t, c_t)$, since the sender does not directly play $\phi_t$, but rather draws an action $y_t$ according to $\phi_{t,c_t}$. Such a regret minimizer $\mathcal{R}$ can be implemented by the algorithm introduced by \citep{abernethy2008competing}, as any polytope in $\mathbb{R}^D$ has a D-self concordant barrier \citep{nesterov1994interior} (Theorem 2.5.1). This yields the stated regret bound \cite{abernethy2008competing} (Theorem 1).
\end{proof}

With the above theorems and corollary, we are now ready to prove Thereom \ref{algo:online_bp}.
\begin{proof}
The proof of this theorem will proceed in three steps.

\textbf{Step 1:} Show that the set of direct and persuasive signaling schemes ${\mathcal{P}}$ is a polytope.

To see this, note that ${\mathcal{P}}$ can be described by the following linear constraints:
\begin{align}
    & \sum_{y \in {\mathcal{Y}}} \phi_{c}(y) = 1, \forall c \in {\mathcal{C}}, \label{eq:prob_dist}\\
    & \sum_{c \in {\mathcal{C}}} \mu_{c}\phi_{{\mathcal{C}}}(y)(v(y,c) - v(y',c)) \geq 0, \forall y, y'\in {\mathcal{Y}}, \label{eq:persuasive}\\
    & \phi_{c}(y) \geq 0, \forall c \in {\mathcal{C}}, y \in {\mathcal{Y}}. \label{eq:non-negative}
\end{align}
Constraint \eqref{eq:prob_dist} ensures that $\phi_{c}$ is a valid probability distribution for each $c$. Constraint \eqref{eq:persuasive} is the persuasiveness constraint. Constraint \eqref{eq:non-negative} ensures non-negativity. As these are all linear constraints, ${\mathcal{P}}$ is a polytope.

\textbf{Step 2:} Define a linear map $f: {\mathcal{P}} \to \mathbb{R}^m$.

Let $f: {\mathcal{Y}} \to \mathbb{R}^m$ be defined as $f(\phi) = [-v(\phi, y)]_{y \in {\mathcal{Y}}}$ for all $\phi \in {\mathcal{P}}$, where $v(\phi, y) = \sum_{c \in {\mathcal{C}}} \mu_{c} \phi_{c}(y) v(y, c)$ is the Advisor's expected utility for action $y$ under signaling scheme $\phi$. We can verify that $f$ is linear:
\begin{align*}
    f(\alpha \phi_1 + \beta \phi_2) &= [-v(\alpha \phi_1 + \beta \phi_2, y)]_{y \in {\mathcal{Y}}} \\
    &= [-\alpha v(\phi_1, y) - \beta v(\phi_2, y)]_{y \in {\mathcal{Y}}} \\
    &= \alpha [-v(\phi_1, y)]_{y \in {\mathcal{Y}}} + \beta [-v(\phi_2, y)]_{y \in {\mathcal{Y}}} \\
    &= \alpha f(\phi_1) + \beta f(\phi_2)
\end{align*}
for any $\phi_1, \phi_2 \in {\mathcal{P}}$ and $\alpha, \beta \in \mathbb{R}$.

\textbf{Step 3:} Apply Corollary \ref{cor:regret_bound} to derive the regret bound.

Since $f$ is a linear map from ${\mathcal{P}}$ to $\mathbb{R}^m$, we have $f({\mathcal{P}}) \subseteq \mathbb{R}^m$, so the dimension of $f({\mathcal{P}})$ is at most $m$. By Corollary \ref{cor:regret_bound}, if the dimension of $f({\mathcal{P}})$ is $D$, then there exists a regret minimizer with regret bound
\begin{equation*}
    R_T \leq 16 D^{3/2} \sqrt{T \log T}.
\end{equation*}
In our setting, $D \leq m$. Therefore, we get the following regret bound:
\begin{equation*}
    R_T = O(m^{3/2} \sqrt{T \log T}).
\end{equation*}

Putting everything together, we have shown that the regret of Algorithm \ref{algo:online_bp} is upper bounded by $O(m^{3/2} \sqrt{T \log T})$, where $m = |{\mathcal{Y}}|$ is the size of the Receiver's
response space.
\end{proof}

% \section{Impact Statements}
% In this work, we have proposed the Bayesian Persuasion Alignment Framework. This method requires minimal training data and computational resources and enhances the alignment capabilities of various LLMs in a model-agnostic manner. Considering the possibility of model misuse, our technology could be used to generate risky text information, posing a risk of negative social impact. We strongly condemn this kind of abusive behavior and call for our method to be used only for legitimate purposes and scientific research.

%%%%%%%%%%%%%%%%%%%%%%%%%%%%%%%%%%%%%%%%%%%%%%%%%%%%%%%%%%%%
\end{document}